\documentclass{article}

\PassOptionsToPackage{numbers, compress}{natbib}

\usepackage[preprint]{neurips_2026}
\newtheorem{proposition}{Proposition}
\newtheorem{proof}{Proof}
\usepackage{pifont}
\newcommand{\cmark}{\ding{51}}
\newcommand{\xmark}{\ding{55}}
\usepackage{amsmath,amssymb}
\usepackage{graphicx}
\usepackage[utf8]{inputenc} 
\usepackage[T1]{fontenc}    
\usepackage{hyperref}       
\usepackage{url}            
\usepackage{booktabs}       
\usepackage{amsfonts}       
\usepackage{nicefrac}       
\usepackage{microtype}      
\usepackage{xcolor}         

\title{Beyond World-Frame Action Heads: Motion-Centric Action Frames for Vision-Language-Action Models}

\author{%
  \textbf{Huoren Yang$^{1,2}$ 
  Jianchao Zhao$^{1,2}$ 
  Hu Yusong$^{2}$ 
  Qiguan Ou$^{2}$ 
  Yuyang Gao$^{2}$} \\
  \textbf{Wei Ke$^{1}$ 
  Yuhang He$^{1}$ 
  SongLin Dong$^{3}$ 
  Zhiheng Ma$^{3}$ 
  Yihong Gong$^{1}$} \\
  \\
  $^{1}$Xi'an Jiaotong University, Xi'an, Shaanxi, CN \\
  $^{2}$One Robotics, Shenzhen, Guangdong, China \\
  $^{3}$Shenzhen University of Advanced Technology, Shenzhen, Guangdong, CN
}

\begin{document}

\maketitle

\begin{abstract}
Vision-Language-Action (VLA) models have advanced rapidly with stronger backbones, broader pre-training, and larger demonstration datasets, yet their action heads remain largely homogeneous: most directly predict action commands in a fixed world coordinate frame. We propose \textbf{MCF-Proto}, a lightweight action head that equips VLA policies with a Motion-Centric Action Frame (MCF) and a prototype-based action parameterization. At each step, the policy predicts a rotation $R_t \in SO(3)$, composes actions in the transformed local frame from a set of prototypes, and maps them back to the world frame for end-to-end training, using only standard demonstrations without auxiliary supervision.
This simple design induces stable emergent structure. Without explicit directional labels, the learned local frames develop a stable geometric structure whose axes are strongly compatible with demonstrated end-effector motion. Meanwhile, actions in the learned representation become substantially more compact, with variation captured by fewer dominant directions and more regularly organized by shared prototypes. These structural properties translate into improved robustness, especially under geometric perturbations. Our results suggest that adding lightweight geometric and compositional structure to the action head can materially improve how VLA policies organize and generalize robotic manipulation behavior. An anonymized code repository is provided in the supplementary material.
\end{abstract}

\section{Introduction}
\label{sec:intro}

Vision-Language-Action (VLA) models have become a dominant paradigm for robotic manipulation by combining large-scale vision-language pre-training with imitation learning on broad robot datasets~\cite{RT-1,Rt-2,Open-x-embodiment,Octo,openvla,3robocat}. Recent progress has been driven largely by stronger perceptual backbones, improved language grounding, and more diverse demonstrations~\cite{vima,unleashing,pact,shridhar2023perceiver}. In contrast, the action side of VLA models remains strikingly uniform: most policies still use a standard regression head to predict end-effector commands in a fixed global world coordinate frame~\cite{shridhar2023perceiver,chi2025diffusion,florence2022implicit,zhao2023learning}.

While simple and general-purpose, this design does not obviously match the structure of manipulation. Many manipulation behaviors are inherently local and geometric~\cite{khatib1987unified,calinon2016tutorial,calinon2014task,simeonov2022neural,ryu2022equivariant}. Actions such as approaching a handle, opening a drawer, inserting along a slot, or rotating around an object-relative axis may share similar interaction structure, yet appear as very different vectors in the world frame. Under such a parameterization, the policy must absorb these variations implicitly in its latent representation while decoding them into a single fixed output space. This motivates a natural question: can VLA policies organize actions more effectively if equipped with a learned local frame and a structured mechanism for action composition within that frame?
\begin{figure}[t]
  \centering
  \includegraphics[width=1.0\textwidth]{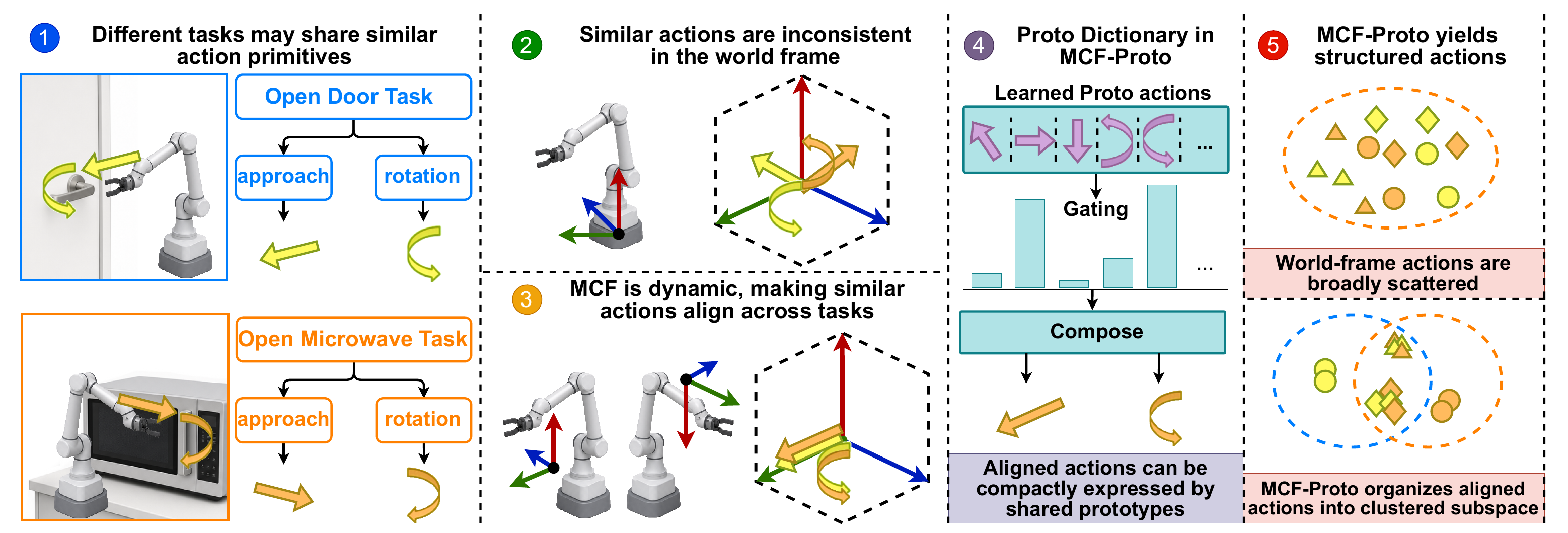}
  \caption{Motivation of MCF-Proto. Similar manipulation behaviors can appear very different when expressed in a fixed world frame, but become better aligned in the learned Motion-Centric Frame (MCF). By combining this local frame with prototype-based action composition, MCF-Proto produces a more compact and structured action representation that is easier to share across tasks and transfer under geometric variation.
}
\vspace{-5pt}
  \label{fig:intro}   
\end{figure}

To answer this question, we propose \textbf{MCF-Proto}, a lightweight action head redesign for standard VLA policies. As shown in Figure~\ref{fig:intro}, instead of directly regressing actions in the world frame, the policy first predicts an observation-conditioned rotation \(R_t \in SO(3)\) to define a Motion-Centric Action Frame (MCF). Within this frame, actions are generated as soft combinations of a small shared prototype dictionary, encouraged by orthogonal regularization to remain distinct and non-redundant, and are then mapped back to the world frame for supervision. The entire model is trained end-to-end using only standard demonstrations, without auxiliary labels such as object poses, contact states, or motion directions.

This simple output-side modification induces consistent emergent structure. Without explicit directional supervision, the learned local frames develop a stable directional structure whose axes remain strongly compatible with demonstrated end-effector motion across diverse tasks. At the same time, actions expressed in the learned representation become substantially more compact: their variation concentrates on fewer dominant directions and is more effectively organized by the shared prototypes. These structural changes translate into better empirical performance. On LIBERO~\cite{liu2023libero}, MCF-Proto improves over strong VLA baselines, and on LIBERO-plus, it yields especially strong gains under geometric perturbations such as robot initial pose changes and camera shifts.

Our contributions are threefold:
\begin{itemize}
    \item We introduce MCF-Proto, a lightweight VLA action head that combines a Motion-Centric Action Frame, and a shared action prototype dictionary, while requiring only standard action supervision.
    \item We show that this design induces stable emergent structure: the learned frame provides a local basis whose axes remain closely compatible with demonstrated end-effector motion, and the resulting action representation becomes more compact and better suited to prototype-based composition.
    \item We demonstrate that these structural properties improve both task performance and robustness, with particularly strong gains under geometric perturbations on LIBERO-plus.
\end{itemize}
\section{Related works}

\paragraph{Behavioral Cloning.}
Behavioral cloning (BC) has become a strong, scalable paradigm for visuomotor manipulation due to simplicity, stable optimization, and compatibility with large-scale demonstration data. Recent works show that image-conditioned policies trained with larger datasets, stronger visual encoders, and multi-task formulations can produce increasingly capable manipulation behaviors across diverse settings~\cite{jang2022bc_Z,shridhar2022cliport,shridhar2023perceiver,chi2025diffusion,zhao2023learning,goyal2023rvt,chen2023polarnet}. These advances have substantially improved perception, policy capacity, and task coverage, establishing BC-style visuomotor policies as competitive alternatives to more complex reinforcement learning pipelines in data-rich regimes~\cite{jang2022bc_Z,shridhar2023perceiver,chi2025diffusion,zhao2023learning,goyal2023rvt,chen2023polarnet,Octo}. Meanwhile, much of this progress has focused on better observation modeling and larger policy backbones, while the action prediction layer often remains relatively generic, directly regressing end-effector commands in a fixed world or robot frame~\cite{RT-1,Open-x-embodiment,chi2025diffusion,florence2022implicit}. Our work is motivated by the view that action parameterization itself is an important design choice: even with a strong backbone, expressing manipulation actions in a more structured local form can make action patterns easier to share across scenes and improve robustness to geometric variation.

\paragraph{Action Frames.}
A long line of work in robotics has emphasized that manipulation actions are often more naturally expressed in local, object-centric, contact-centric, or task-aligned coordinate systems rather than a single global frame~\cite{khatib1987unified,calinon2016tutorial,calinon2014task,alizadeh2014learning,chen2022manipulation}. Such representations can simplify control, improve invariance to changes in scene layout, and better reflect the geometry of physical interaction~\cite{nakanishi2008operational,calinon2014task,simeonov2022neural,Eisner_se3}. Prior approaches have incorporated local frames through object-centric policies, affordance-guided control, equivariant architectures, or explicit geometric reasoning modules~\cite{devin2018deep,chen2022manipulation,sharma2020learning,simeonov2022neural,ryu2022equivariant,simeonov2023se,Eisner_se3,zhu2023learning,shen2023distilled}. More broadly, these works suggest that choosing an appropriate coordinate system can substantially reduce the burden on downstream control prediction. Our work follows this perspective, but integrates it into a lightweight policy-head design for imitation learning: instead of assuming a predefined object or interaction frame, we predict an observation-conditioned MCF directly from policy features, and use it to express the action before mapping it back to the world frame. In this way, the learned frame serves as a compact intermediate representation for action generation rather than an externally specified geometric input.

\paragraph{Prototype Actions.}
Our action representation is also related to prior work on motion primitives, dictionary learning, structured latent actions, and compositional policy outputs~\cite{ijspeert2013dynamical,paraschos2013probabilistic,morrow1997manipulation,felip2013manipulation}. Across robotics and machine learning, reusable basis elements or prototype components have been used to capture recurring structure in trajectories, skills, and control policies, often improving parameter efficiency, interpretability, and transfer~\cite{ijspeert2013dynamical,paraschos2013probabilistic,ruckert2013learned,przystupa2023deep,frank2021constrained}. Related ideas also appear in mixture-of-experts models, sparse coding, and other compositional prediction frameworks, where outputs are formed from a small set of shared components~\cite{shafiullah2022behavior,cui2023from,lee2024behavior}. Our work adopts this general principle at the action-head level: instead of predicting each action dimension independently, we compose local translational and rotational actions from small sets of shared prototypes, each implemented as a learned linear map to a 3D action increment. The key difference is that this composition is performed in the learned MCF rather than directly in the world frame. This combination allows similar motion patterns to be shared after geometric alignment, leading to a more compact local action representation.
\section{Method}
\label{sec:method}
\begin{figure}[t]
  \centering
  \includegraphics[width=1.0\textwidth]{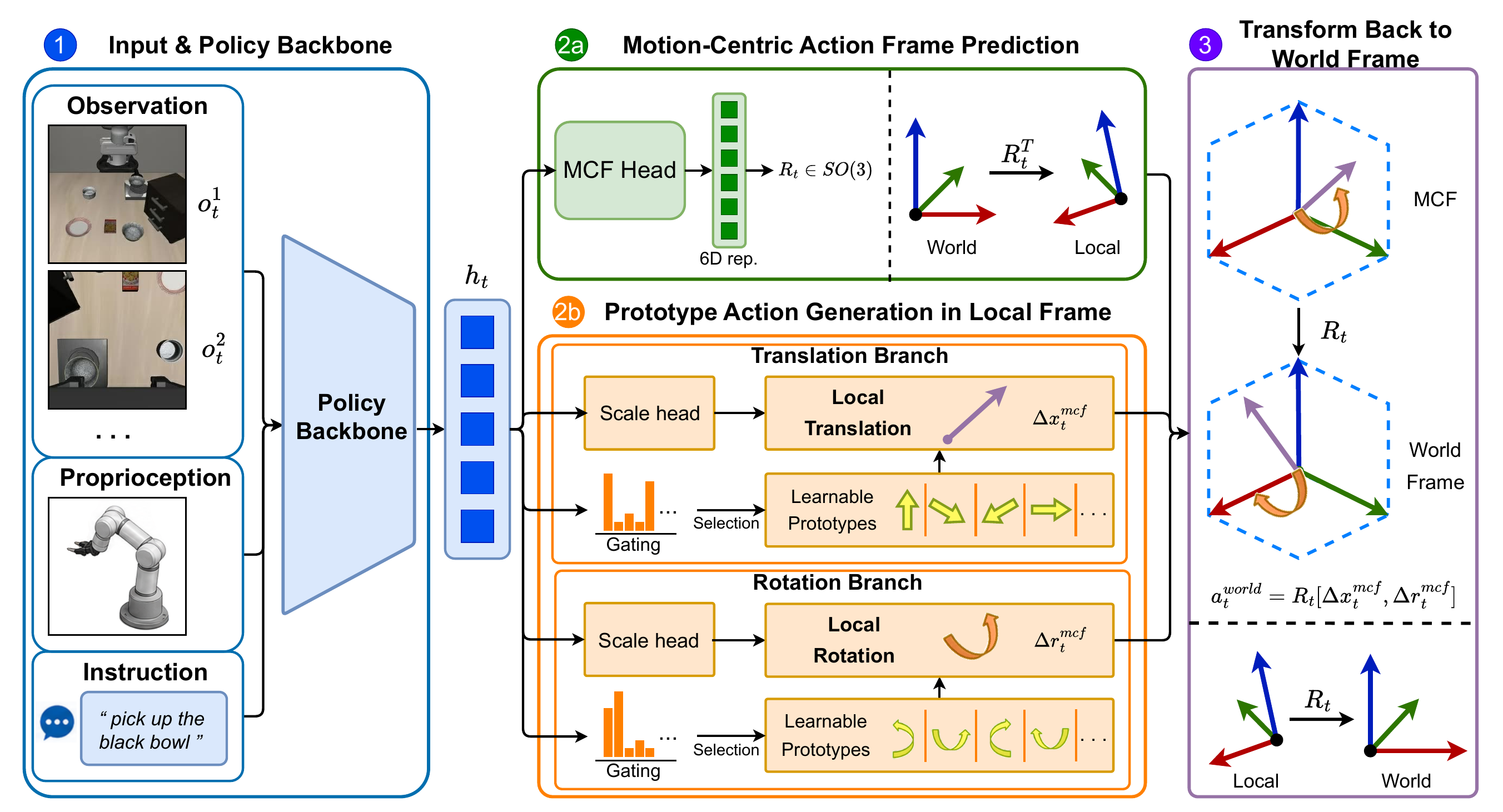}
  \caption{Overview of MCF-Proto. Given observations and task instruction, the policy backbone extracts latent features that are used to predict MCF. Translation and rotation actions are then generated in this local frame through soft combinations of shared prototypes, and finally transformed back to the world frame for standard action supervision.}
  \label{fig:method}   

\end{figure}
We introduce \textbf{MCF-Proto}, a lightweight redesign of the action head for vision-language-action policies. Instead of predicting end-effector actions directly in the world frame, MCF-Proto first predicts an observation-conditioned local rotation \(R_t \in SO(3)\), uses it to define a Motion-Centric Action Frame (MCF), composes actions in that frame from a small set of shared prototypes, and then maps the result back to the world frame for standard imitation learning.

This design is motivated by two simple observations. First, manipulation actions are often easier to describe in a local geometric frame than in a fixed global one. Second, many manipulation behaviors share reusable local motion patterns even when their world-frame realizations vary across scenes, object poses, or viewpoints. MCF-Proto builds these two biases directly into the policy output layer while leaving the backbone and supervision unchanged.

\subsection{Setup}
\label{subsec:setup}

Given an observation \(o_t\), a VLA backbone produces a latent feature
$h_t = f_{\text{backbone}}(o_t).$
We consider actions consisting of end-effector translation and rotation increments together with any remaining control dimensions:
$a_t^\star = (\Delta x_t^\star, \Delta r_t^\star, a_{t,\mathrm{rest}}^\star),$
where \(\Delta x_t^\star \in \mathbb{R}^3\) and \(\Delta r_t^\star \in \mathbb{R}^3\). A conventional action head would predict these quantities directly in the world frame. Our goal is to predict the same action target through a structured local representation.

\subsection{Motion-Centric Action Frame}
\label{subsec:mcf}

As shown in Figure~\ref{fig:method}, from the policy feature \(h_t\), we predict a rotation \(R_t \in SO(3)\) using a lightweight MLP with a continuous 6D rotation parameterization. This rotation defines the local \textit{Motion-Centric Action Frame} at time \(t\).

Let \(\Delta x_t^{\mathrm{mcf}} \in \mathbb{R}^3\) and \(\Delta r_t^{\mathrm{mcf}} \in \mathbb{R}^3\) denote the translational and rotational increments expressed in the MCF. They are mapped back to the world frame by:
\[
\Delta x_t = R_t \Delta x_t^{\mathrm{mcf}}, \qquad
\Delta r_t = R_t \Delta r_t^{\mathrm{mcf}}.
\]
The final predicted action is
$\hat{a}_t = (\Delta x_t, \Delta r_t, \hat{a}_{t,\mathrm{rest}}),$
where \(\hat{a}_{t,\mathrm{rest}}\) is predicted by a small auxiliary head.

The role of the MCF is to absorb part of the scene-dependent geometric variation before action decoding. As we show later, the learned frame develops a smoothly varying directional structure whose axes are often compatible with demonstrated end-effector motion.

\subsection{Prototype composition in the local frame}
\label{subsec:proto}

Within the MCF, we parameterize actions using a small set of shared prototypes. Concretely, each prototype is a matrix \(B_k \in \mathbb{R}^{3 \times d}\)
that maps a latent coefficient vector to a 3D local action increment. We maintain separate prototype dictionaries for translation and rotation:
\[
B^{\text{trans}} = \{B_k^{\text{trans}}\}_{k=1}^{K_t}, \qquad
B^{\text{rot}} = \{B_k^{\text{rot}}\}_{k=1}^{K_r},
\]
Given the latent state \(h_t\), the model predicts latent scaling vectors
$z^{\text{trans}}, z^{\text{rot}} \in \mathbb{R}^d,$
as well as prototype gating logits for translation and rotation. After a softmax, these produce gating distributions \(\pi_t^{x}\) and \(\pi_t^{r}\), respectively. The local action is then composed as:
\[
\Delta x_t^{\mathrm{mcf}} = \sum_{k=1}^{K_t} \pi_t^{x}[k]\, B_k^{\text{trans}} z^{\text{trans}},
\qquad
\Delta r_t^{\mathrm{mcf}} = \sum_{k=1}^{K_r} \pi_t^{r}[k]\, B_k^{\text{rot}} z^{\text{rot}}.
\]

This local prototype parameterization encourages the policy to express actions through a small set of shared prototype mappings modulated by latent scale coefficients. Combined with the learned local frame, it allows similar manipulation behaviors to exhibit shared structure even when their world-frame directions differ. In practice, this leads to a substantially more concentrated action distribution in the learned local frame than in the original world frame.

The two components of MCF-Proto play distinct but complementary roles. The predicted local frame reduces scene-dependent geometric variation by aligning actions into an observation-conditioned coordinate system, while the prototype dictionaries exploit the resulting alignment to capture recurrent motion patterns through a shared low-dimensional parameterization. Without the local frame, similar behaviors may correspond to very different world-frame directions and are therefore harder to organize consistently across tasks. Without prototype composition, the local frame alone provides geometric alignment but does not explicitly encourage shared structure or low-dimensional organization in the action output space. Their combination yields a more structured representation in which directional alignment and prototype-based composition emerge jointly.

To keep the prototype dictionaries well-conditioned, we apply an orthogonality regularizer:
\[
\mathcal{L}_{\text{ortho}}
=
\left\|
(B^{\text{trans}})^\top B^{\text{trans}} - I
\right\|_F^2
+
\left\|
(B^{\text{rot}})^\top B^{\text{rot}} - I
\right\|_F^2.
\]

\subsection{Learning objective}
\label{subsec:objective}

MCF-Proto is trained end-to-end by supervising the final predicted end-effector incremental action in the world frame. Let \(a_t^\star\) denote the demonstrated action and let \(\hat{a}_t\) denote the predicted action. We supervise translational and gripper dimensions using an \(L_1\) loss, and rotational dimensions using a Smooth \(L_1\) loss in radians. Let \(I_{\text{trans}}, I_{\text{rot}}, I_{\text{grip}}\) denote the translation, rotation, and gripper index sets in the full action vector. The action supervision term is:
\[
\mathcal{L}_{\text{act}}
=
\|\hat{a}_t[I_{\text{trans}}] - a_t^\star[I_{\text{trans}}]\|_1
+
\|\hat{a}_t[I_{\text{grip}}] - a_t^\star[I_{\text{grip}}]\|_1
+
\mathrm{SmoothL1}\!\left(\hat{a}_t[I_{\text{rot}}], a_t^\star[I_{\text{rot}}]\right).
\]

To encourage temporal consistency of the learned motion-centric frame, we regularize consecutive frame rotations to vary smoothly over time. Intuitively, the local frame is encouraged to evolve smoothly over time rather than rotating abruptly to follow every instantaneous action change. When the action direction shifts across task stages, the model can explain such variation through different axes within the frame, while preserving a relatively stable local basis. Let \(R_t\) denote the frame rotation at time \(t\). We define
$\Delta R_t = R_t R_{t-1}^\top,
\cos \theta_t = \frac{\mathrm{tr}(\Delta R_t) - 1}{2},$
and use the smoothness loss
$\mathcal{L}_{\text{smooth}} = 1 - \cos \theta_t.$
The final objective is:
\[
\mathcal{L}
=
\mathcal{L}_{\text{act}}
+
\lambda_{\text{ortho}} \mathcal{L}_{\text{ortho}}
+
\lambda_{\text{smooth}} \mathcal{L}_{\text{smooth}}.
\]

\section{Experiments}
\label{sec:experiments}

We evaluate MCF-Proto from four perspectives: benchmark performance, robustness under distribution shift, analysis of the learned local action representation, and real-world validation. We first report results on LIBERO~\cite{liu2023libero} and LIBERO-plus~\cite{fei2025liberoplus}, then analyze how the learned motion-centric frame reshapes the action space through concentration and geometric compatibility diagnostics, and finally present real-world results and hierarchical ablations.

\subsection{Benchmarks and metrics}
\label{subsec:benchmarks}

\textbf{LIBERO.}
We evaluate on the standard LIBERO benchmark, including the Spatial, Object, Goal, and Long suites. We report task success rate (\%) on each suite and the average across all suites.
\textbf{LIBERO-plus.}
We further evaluate robustness on LIBERO-plus, which introduces seven perturbation categories: Camera, Robot, Language, Light, Background, Noise, and Layout. We report success rate (\%) for each category.

\subsection{Baselines}
\label{subsec:baselines}

We compare against a diverse set of recent VLA and robot policy baselines. On LIBERO, we include $\pi_0$+FAST~\cite{fast}, OpenVLA-OFT~\cite{openvlaoft}, $\pi_0$~\cite{pi_0}, FLOWER~\cite{reuss2025flower}, GR00T-N1.5~\cite{gr00tn1_2025}, and BEAST~\cite{zhou2025beastefficienttokenizationbsplines}. On LIBERO-plus, we compare against OpenVLA~\cite{openvla}, OpenVLA-OFT, $\pi_0$, $\pi_0$-fast, Nora~\cite{nora}, WorldVLA~\cite{WorldVLA}, UniVLA~\cite{univla}, and RIPT-VLA~\cite{riptvla}.
\begin{table}[htbp]
  \centering
  \small
  \caption{Experimental Results on the LIBERO Benchmark. Success rate (\%) is reported for each task suite. 
  \textbf{Bold} indicates the best performance, \underline{underlined} indicates the second best.}
  \label{tab:libero_main}
  \begin{tabular}{lccccc}
    \toprule
    Model               & Spatial & Object & Goal & Long & Avg \\
    \midrule
    $\pi_0$+FAST        & 96.4    & 96.8   & 88.6 & 60.2 & 85.5 \\
    OpenVLA-OFT         & 97.6    & 98.4   & 97.9 & 94.5 & 97.1 \\
    $\pi_0$             & 96.8    & 98.8   & 95.8 & 85.2 & 94.1 \\
    FLOWER              & 97.1    & 96.7   & 95.6 & 93.5 & 95.7 \\
    GR00T-N1.5          & 92.0    & 92.0   & 86.0 & 76.0 & 86.5 \\
    BEAST               & 92.9    & 97.5   & 93.1 & 86.4 & 92.5 \\
    \midrule
    Ours                & \textbf{98.6$\pm$0.2} & \textbf{98.8$\pm$0.2} & \textbf{98.4$\pm$0.2} & \textbf{95.0$\pm$0.5} & \textbf{97.7$\pm$0.3} \\
    \bottomrule
  \end{tabular}
\end{table}

\begin{table}[htbp]
  \centering
  \small
  \caption{Experimental Results on the LIBERO-plus Benchmark. Success rate (\%) is reported for each task suite. Best is marked in \textbf{bold}, second best is marked in \underline{underlined}.}
  \label{tab:libero_plus_main}
  \setlength{\tabcolsep}{5pt}
  \begin{tabular}{lccccccc}
    \toprule
    Model          & Camera & Robot & Language & Light & Background & Noise & Layout \\
    \midrule
    OpenVLA             & 1.1    & 4.1   & 26.8     & 4.4   & 25.3       & 19.3  & 31.6   \\
    OpenVLA-OFT         & 59.7   & 37.2  & \textbf{81.5}     & 85.8  & \underline{92.4}       & 76.7  & \underline{77.1}   \\
    $\pi_0$             & 15.8   & 6.6   & 61.0     & 79.6  & 78.5       & 79.4  & 70.4   \\
    $\pi_0$-fast        & 66.4   & 24.8  & 63.3     & 73.0  & 67.7       & 75.8  & 70.3   \\
    Nora                & 4.0    & 41.1  & 67.0     & 31.0  & 50.5       & 17.6  & 63.9   \\
    WorldVLA            & 0.3    & 30.2  & 44.2     & 29.4  & 14.5       & 12.2  & 39.4   \\
    UniVLA              & 4.3    & \underline{50.3}  & 71.8     & 59.1  & 80.0       & 25.3  & 34.3   \\
    RIPT-VLA            & \underline{58.3} & 36.7  & \underline{80.1} & \underline{87.9} & 90.4 & \underline{73.8} & 76.5 \\
    \midrule
    Ours                & \textbf{69.7$\pm$0.2} & \textbf{66.0$\pm$0.2} & \underline{80.1$\pm$0.3} & \textbf{98.3$\pm$0.1} & \textbf{93.3$\pm$0.2} & \textbf{84.0$\pm$0.3} & \textbf{82.1$\pm$0.2} \\
    \bottomrule
  \end{tabular}
\end{table}
\subsection{Main results on LIBERO}
\label{subsec:libero}

Table~\ref{tab:libero_main} summarizes the results on LIBERO. MCF-Proto achieves the best overall performance with an average success rate of \(97.7\%\), outperforming all baselines. In particular, it achieves \(98.6\%\) on Spatial, \(98.8\%\) on Object, \(98.4\%\) on Goal, and \(95.0\%\) on Long.

Compared with the strongest prior baseline, OpenVLA-OFT, our method improves the average score from \(97.1\%\) to \(97.7\%\), with gains on Spatial, Goal, and Long. The gain on the Long suite is especially notable, since long-horizon tasks are more sensitive to compounding action errors. This suggests that expressing actions in a learned local frame and composing them from shared prototypes can improve action consistency over extended horizons.

More broadly, these results indicate that the improvement does not simply come from a stronger perceptual or sequence modeling backbone, since the backbone is unchanged and only the action head is modified. Instead, the gains suggest that action parameterization itself can materially affect policy performance. The consistent improvement across multiple LIBERO suites further supports the view that local geometric alignment and structured action composition provide a useful inductive bias for manipulation behaviors beyond a single task family.

\subsection{Main results on LIBERO-plus}
\label{subsec:liberoplus}
Table~\ref{tab:libero_plus_main} reports the results on LIBERO-plus. Our method performs best on six of the seven perturbation categories: Camera (\(69.7\%\)), Robot (\(66.0\%\)), Light (\(98.3\%\)), Background (\(93.3\%\)), Noise (\(84.0\%\)), and Layout (\(82.1\%\)). On Language, it achieves \(80.1\%\), staying close to the best reported baseline (\(81.5\%\)).

The robustness gains on LIBERO-plus are not uniform across perturbation types, and this pattern is informative. The largest improvements occur under Camera and Robot perturbations, where the mapping from perception to action is most affected by geometric changes in viewpoint or shifts in the robot's initial pose. This is precisely the setting where a fixed world-frame action parameterization is most brittle, since similar interaction intents may correspond to substantially different action vectors. In contrast, expressing actions in an observation-conditioned local frame allows the policy to absorb part of this variation before decoding the control command. Strong results under Light, Background, and Noise further suggest that the proposed action parameterization remains effective under perceptual disturbances. The smaller relative gain on Language perturbations is also consistent with our design: MCF-Proto primarily restructures the action output space rather than directly improving language understanding.
\begin{figure}[t]
  \centering
  \includegraphics[width=1.0\textwidth]{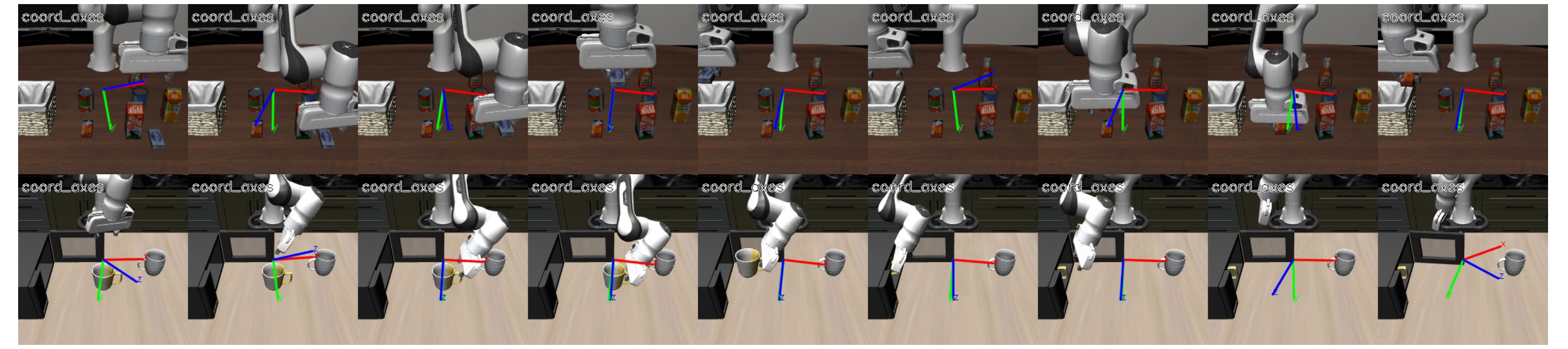}
  \caption{Learned MCF visualization on LIBERO. The predicted MCF provides a smooth local basis whose axes remain strongly compatible with the dominant motion direction in two representative manipulation tasks, despite being learned without explicit directional supervision.}
  \label{fig:libero}   
\end{figure}
\subsection{Local-frame action concentration analysis}
\label{subsec:concentration}
\begin{table}[h]
\centering
\caption{Quantitative comparison of action concentration metrics between world-frame and MCF}
\label{tab:action_concentration}
\small
\begin{tabular}{lccc}
\toprule
\textbf{Metric}               & \textbf{World (mean±std)} & \textbf{MCF (mean±std)} & \textbf{Relative Change} \\ \midrule
Covariance Trace              & 6.71 ± 1.93               & 5.55 ± 2.25               & -17.8\%                  \\
Average Pairwise Distance     & 3.41 ± 0.46               & 2.93 ± 0.53               & -14.0\%                  \\

PCA Top-3 Explained Variance  & 0.77 ± 0.03               & 0.90 ± 0.03               & +16.7\%                  \\
Effective Rank                & 4.03 ± 0.54               & 1.94 ± 0.21               & -51.3\%                  \\ \bottomrule
\end{tabular}%
\end{table}
To quantify how the learned local frame changes the structure of the action space, we compare world-frame actions and local-frame actions using four task-wise statistics: Covariance Trace, Average Pairwise Distance, Top-3 PCA Explained Variance, and Effective Rank. As shown in Table~\ref{tab:action_concentration}, the local-frame representation is consistently more concentrated than the world-frame representation across all four metrics. Specifically, local-frame actions have lower covariance trace (\(5.55\) vs.\ \(6.71\)) and lower average pairwise distance (\(2.93\) vs.\ \(3.41\)), while showing higher Top-3 PCA explained variance (\(0.90\) vs.\ \(0.77\)) and lower effective rank (\(1.94\) vs.\ \(4.03\)).

These results indicate that the learned motion-centric frame reorganizes action variability into a lower-dimensional and more concentrated local representation. Figure~\ref{fig:scatter} illustrates the same effect qualitatively: in the local frame, task-specific groupings become more distinguishable than in the world frame, suggesting that geometric alignment makes recurring manipulation patterns easier to separate and reuse.

This concentration effect is important because it provides a quantitative explanation for why prototype-based composition becomes more effective in the learned local frame. When action variation is distributed across fewer dominant directions, a small shared prototype dictionary can capture more of the task-relevant structure with less redundancy. In this sense, the MCF does not merely rotate the action space; it reorganizes it into a form that is more amenable to compact reuse.
\begin{figure}[t]
  \centering
  \resizebox{0.9\textwidth}{!}{%
  \includegraphics{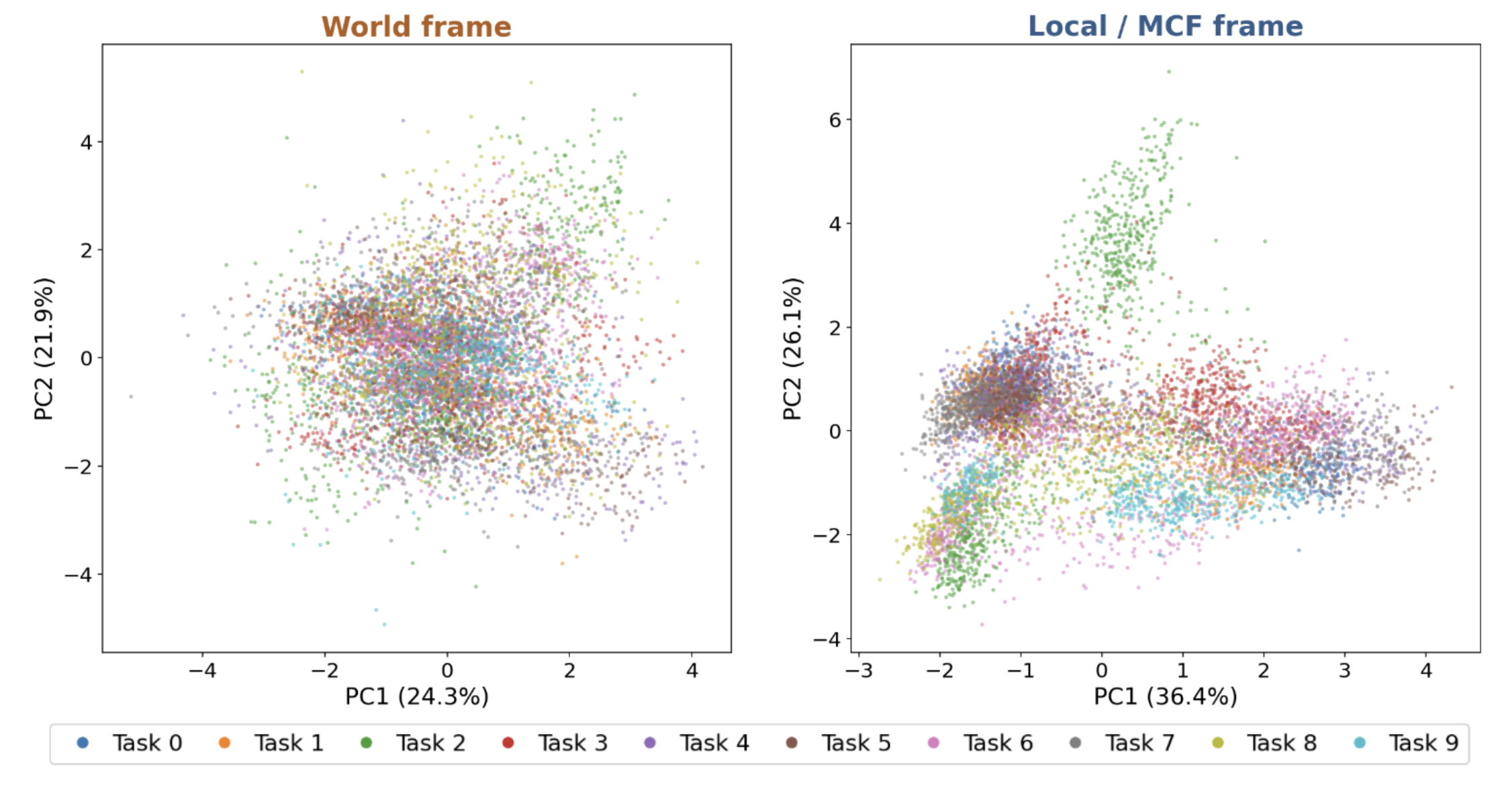}
  }
  \caption{Visualization of action distributions in the world frame and the learned local frame. Compared with world-frame actions, actions expressed in the Motion-Centric Frame are more concentrated and form more separable task-specific clusters, suggesting that the learned local representation better captures shared and reusable manipulation structure.}
  \label{fig:scatter}
  \vspace{-1pt}
\end{figure}
\subsection{Geometric diagnostic for action compatibility}
\label{subsec:compatibility}
\begin{table}[h]
\centering
\caption{Angular error (°) between MCF axes and demonstrated end-effector motion directions}
\small
\label{tab:rot_error_transpose}
\begin{tabular}{lccccc}
\toprule
\textbf{Metric} & \textbf{place} & \textbf{door-close} & \textbf{knob-turn} & \textbf{drawer-close} & \textbf{insert} \\
\midrule
MCF only & $35.4^\circ\pm3.1^\circ$ & $34.7^\circ\pm1.4^\circ$ & $35.7^\circ\pm2.9^\circ$ & $35.4^\circ\pm7.5^\circ$ & $32.8^\circ\pm5.9^\circ$ \\
MCF-Proto      & $18.8^\circ\pm3.9^\circ$ & $18.2^\circ\pm1.5^\circ$ & $10.8^\circ\pm3.4^\circ$ & $17.4^\circ\pm2.7^\circ$ & $13.2^\circ\pm6.8^\circ$ \\
\bottomrule
\end{tabular}
\end{table}
We further assess whether the learned motion-centric frame captures useful geometric structure by measuring how well its axes align with demonstrated local motion on representative LIBERO-Long tasks. For each time step, we compute the minimum unsigned angle between the demonstrated end-effector displacement direction and the three axes of the predicted frame. A smaller angle indicates that at least one learned axis is compatible with the demonstrated local motion.

Table~\ref{tab:rot_error_transpose} shows that MCF+Proto consistently achieves lower angular error than MCF only, reducing the discrepancy from roughly \(33^\circ \sim 36^\circ\) to \(11^\circ \sim 19^\circ\). This suggests that prototype-based local action composition not only improves prediction accuracy, but also encourages the learned frame to expose directions that are more compatible with demonstrated motion.

Together, these results suggest that the learned frame is not arbitrary: it develops axes that reflect meaningful task geometry. Moreover, the improvement from MCF only to MCF-Proto indicates that local prototype composition helps stabilize the emergence of this directional structure. A more structured local action space appears to make the frame itself easier to learn in a motion-aligned way.

\subsection{Real-world experiments on OpenArm}
\label{subsec:realworld}

\begin{table}[t]
\centering
\caption{Real-world evaluation on OpenArm. Task 1 reports the success rate (\%) of placing at least the first \(k\in\{1,2,3,4\}\) test tubes correctly, together with the average completed trajectory length. Tasks 2 and 3 report overall task success rate. Higher is better for all metrics.}
\label{tab:openarm_real}
\small
\setlength{\tabcolsep}{3.2pt} 
\begin{tabular}{lccccc|c|c}
\toprule
& \multicolumn{5}{c|}{\textbf{1: Test-tube placement}} & \textbf{2: Bi. transfer} & \textbf{3: Bowl stack} \\
\cmidrule(lr){2-6} \cmidrule(lr){7-7} \cmidrule(lr){8-8}
\textbf{Method} & \textbf{@1} & \textbf{@2} & \textbf{@3} & \textbf{@4} & \textbf{Avg. Len} & \textbf{Success} & \textbf{Success} \\
\midrule
\(\pi_{0}\)
& 64.0$\pm$4.2 & 26.0$\pm$3.7 & 14.0$\pm$2.9 & 8.0$\pm$2.3
& 1.12$\pm$0.07
& 32.0$\pm$6.8 & 66.0$\pm$7.3 \\

\(\pi_{0.5}\)
& 80.0$\pm$3.6 & 32.0$\pm$3.1 & 24.0$\pm$2.7 & 18.0$\pm$2.4
& 1.54$\pm$0.06
& 40.0$\pm$5.9 & 72.0$\pm$6.1 \\

Ours
& \textbf{84.0$\pm$3.1} & \textbf{56.0$\pm$2.8} & \textbf{42.0$\pm$2.6} & \textbf{38.0$\pm$2.2}
& \textbf{2.20$\pm$0.05}
& \textbf{64.0$\pm$4.7} & \textbf{82.0$\pm$4.3} \\
\bottomrule
\end{tabular}
\end{table}
We further evaluate our method on a real OpenArm platform\footnote{\url{https://openarm.dev}} with three long-horizon manipulation tasks: placing four test tubes onto a rack, bimanually transferring a block between bowls, and stacking two bowls. Table~\ref{tab:openarm_real} summarizes the quantitative results. Across all three tasks, our method shows stronger long-horizon reliability than the baseline variants.

These results suggest that the benefits of local action alignment and prototype-based composition also extend to real-world manipulation. The advantage is especially meaningful in multi-stage tasks, where small low-level prediction errors can accumulate over time and destabilize the full trajectory.
\begin{figure}[t]
  \centering
  \includegraphics[width=1.0\textwidth]{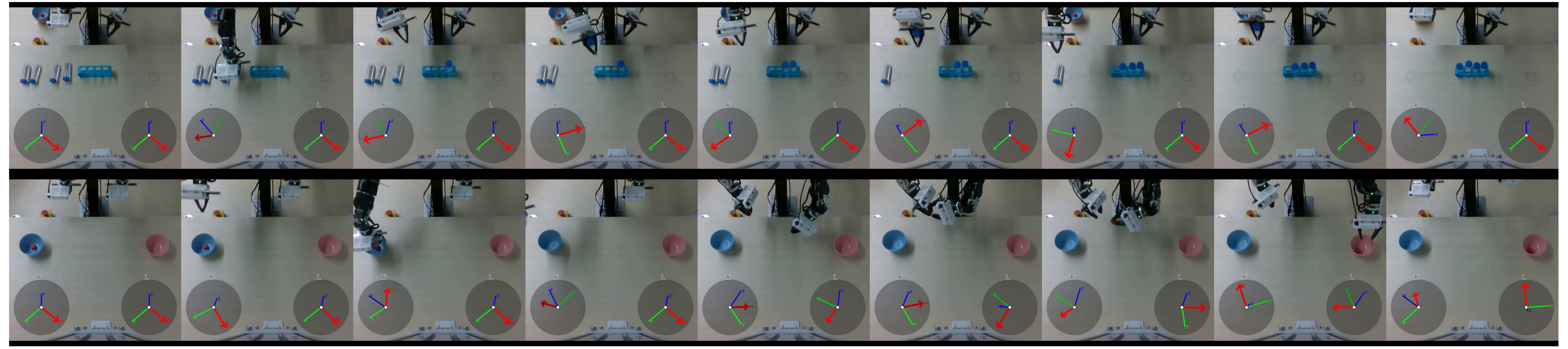}
  \caption{The predicted local frame maintains a stable local basis whose axes remain geometrically compatible with the principal task motion direction in real-world rollouts, suggesting that similar geometric structure also emerges in real-world manipulation.}
  \label{fig:openarm}   

\end{figure}
\subsection{Ablation studies}
\label{subsec:ablation}
\begin{table}[t]
\centering

\setlength{\tabcolsep}{5pt}
\caption{Hierarchical ablation of MCF-Proto. Avg denotes the average task success rate over all evaluation suites.}

\begin{tabular}{lccccc}
\toprule
\textbf{Method} & \textbf{MCF} & \textbf{Proto} & \(\boldsymbol{L_{\text{ortho}}}\) & \(\boldsymbol{L_{\text{smooth}}}\) & \textbf{Avg (\%)} \\
\midrule
BC w/ MLP head                & \xmark & \xmark & \xmark & \xmark & 93.8$\pm$0.6 \\
MCF only                      & \cmark & \xmark & \xmark & \xmark & 94.7$\pm$0.5 \\
World-Proto                   & \xmark & \cmark & \xmark & \xmark & 94.4$\pm$0.6 \\
MCF + Proto                   & \cmark & \cmark & \xmark & \xmark & 96.9$\pm$0.3 \\
MCF + Proto + \(L_{\text{ortho}}\)
                              & \cmark & \cmark & \cmark & \xmark & 97.4$\pm$0.2 \\
MCF-Proto (full)              & \cmark & \cmark & \cmark & \cmark & 97.7$\pm$0.3 \\
\bottomrule
\end{tabular}
\vspace{-2pt}
\label{tab:ablation_hierarchical}
\end{table}

Table~\ref{tab:ablation_hierarchical} shows hierarchical ablations of MCF-Proto. Introducing the motion-centric frame alone improves over the BC baseline (\(93.8\%\rightarrow94.7\%\)), and prototype composition alone in the world frame is also beneficial (\(94.4\%\)). Combining both yields a substantially larger gain (\(96.9\%\)), indicating that MCF and prototype composition are complementary. Adding \(\mathcal{L}_{\text{ortho}}\) further improves performance to \(97.4\%\), and adding the temporal smoothness regularizer \(\mathcal{L}_{\text{smooth}}\) gives the full model with the best average success rate (\(97.7\%\)).

Together, the diagnostic and ablation results suggest that MCF and prototype composition are not merely additive choices, but reinforce each other during training: a better local frame makes actions more comparable across states and tasks, enabling easier prototype reuse; in turn, prototype-based local prediction encourages the frame to expose directions more consistent with demonstrated motion. This mutual reinforcement likely explains why the combined model yields much larger gains than either component alone.

\section{Discussion and Limitations}

MCF-Proto shows that action parameterization is an important but underexplored design axis for vision-language-action policies. By redesigning only the action head, the method consistently improves LIBERO performance and LIBERO-plus robustness, with especially clear gains under camera and robot perturbations. Our analyses further suggest that these gains stem from a more structured action representation: the learned motion-centric frame concentrates action variation in a local coordinate system, while prototype-based composition yields more compact and reusable motion structure after alignment. These results indicate that lightweight geometric and compositional structure on the output side can provide a useful inductive bias for manipulation learning.

At the same time, the current study has several limitations. The evaluation is limited to LIBERO, LIBERO-plus, and a small number of real-world tasks, and broader validation is needed to assess generality across more diverse settings, especially for highly dynamic, deformable, or fine-grained contact-rich manipulation. In addition, the learned motion-centric frame remains an implicit latent representation rather than a semantically grounded coordinate system. More broadly, as a behavioral cloning method, MCF-Proto still depends on the coverage and quality of offline demonstrations. An important direction for future work is to test whether the same action-head design continues to help at larger scale, under online adaptation, and with more explicit geometric or object-aware representations.



{
    \small
    \bibliographystyle{plainnat}
    \bibliography{main}



}

\appendix

\section*{Anonymous Code Repository}
An anonymized code repository for reproducing the experiments is provided below:

\begin{center}
\url{https://anonymous.4open.science/r/MCF-Proto-455B}
\end{center}

\section{Why compact local representations favor directional alignment}
\label{app:theory}

In the main paper, MCF-Proto combines a learned local action frame with prototype-based action composition. Empirically, this joint design produces two closely related effects: actions become more concentrated when expressed in the learned local frame, and the frame itself develops stable axes that align strongly with demonstrated end-effector motion.

This appendix provides a simple theoretical view of that connection. We study an idealized setting in which the learning system prefers local coordinates that yield a more compact action representation. Under standard symmetry assumptions, the preferred frame aligns its axes with the principal directions of the action distribution. While this analysis does not model the full MCF-Proto architecture, it helps explain why prototype-based local action reuse can naturally induce motion-aligned coordinate structure.

Let \(a \in \mathbb{R}^d\) be a zero-mean action in the world frame, and let
\[
u = R^\top a
\]
denote its representation in a learned local frame \(R \in SO(d)\). Since reconstruction alone is invariant to orthogonal changes of basis, any preference over \(R\) must come from additional structure imposed on the local coordinates \(u\). Here we model that preference through the objective
\[
\min_{R \in SO(d)} J(R), \qquad
J(R) := \mathbb{E}\|R^\top a\|_1,
\]
which favors frames in which action variation is more concentrated across coordinates.

Assume \(a\) follows an elliptically symmetric distribution with covariance
\[
\Sigma = \mathbb{E}[aa^\top].
\]
Then \(a\) can be written as
\[
a = \Sigma^{1/2} z,
\]
where \(z\) is a spherically symmetric random vector with identity covariance.

\begin{proposition}
Under the above assumptions,
\[
\mathcal{J}(R)
=
c \sum_{i=1}^d \sqrt{(R^\top \Sigma R)_{ii}}
\]
for some constant \(c = \mathbb{E}|z_1| > 0\). Moreover, \(\mathcal{J}(R)\) is minimized when \(R^\top \Sigma R\) is diagonal. Equivalently, the minimizing frame \(R\) has columns given by eigenvectors of \(\Sigma\). Thus, up to permutation and sign flips, the learned coordinate axes align with the principal directions of the training action distribution.
\end{proposition}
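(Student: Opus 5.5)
The plan has two parts. First I compute $J(R)$ column by column using spherical symmetry. Then I minimize the resulting expression over $SO(d)$ by a majorization argument on the diagonal of $R^\top \Sigma R$. Throughout, $\mathcal{J}$ in the statement denotes the objective $J$ defined above.

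\emph{Closed form.} Write $r_i$ for the $i$-th column of $R$, so that $(R^\top a)_i = r_i^\top \Sigma^{1/2} z = w_i^\top z$ with $w_i := \Sigma^{1/2} r_i$. Spherical symmetry of $z$ means $Qz \overset{d}{=} z$ for every orthogonal $Q$. Taking $Q$ to map $w_i/\|w_i\|$ to $e_1$ gives $w_i^\top z \overset{d}{=} \|w_i\|\, z_1$. Hence $\mathbb{E}|(R^\top a)_i| = \|w_i\|\,\mathbb{E}|z_1| = c\sqrt{r_i^\top \Sigma r_i} = c\sqrt{(R^\top\Sigma R)_{ii}}$. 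Summing over $i$ gives the stated formula. The constant $c$ is positive because $z$ has identity covariance, so $z_1 \neq 0$ with positive probability.

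\emph{Minimization.} Let $M = R^\top \Sigma R$. Its eigenvalues are $\lambda = (\lambda_1,\dots,\lambda_d)$, those of $\Sigma$, independently of $R$. Let $\delta = \operatorname{diag}(M)$ and set $\phi(x) = \sum_i \sqrt{x_i}$ on $\mathbb{R}^d_{\ge 0}$, so that $J(R) = c\,\phi(\delta)$.
\begin{itemize}
\item By the Schur--Horn theorem, $\delta$ is majorized by $\lambda$. Concretely, $\delta = P\lambda$ where $P_{ij} = (r_i^\top v_j)^2$ is doubly stochastic and $v_j$ are unit eigenvectors of $\Sigma$.
\item The function $\phi$ is symmetric and a sum of the strictly concave function $\sqrt{\cdot}$, so it is strictly Schur-concave. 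Hence $\phi(\delta) \ge \phi(\lambda) = \sum_j \sqrt{\lambda_j}$.
\item Equality holds iff $\delta$ is a permutation of $\lambda$. To see this without citing strict Schur-concavity, apply Jensen row by row to $\delta_i = \sum_j P_{ij}\lambda_j$: this gives $\sqrt{\delta_i} \ge \sum_j P_{ij}\sqrt{\lambda_j}$, and summing and using $\sum_i P_{ij} = 1$ gives the bound. Equality forces each row of $P$ to be supported on indices with a common $\lambda_j$, which makes $\delta$ a permutation of $\lambda$.
\end{itemize}

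\emph{Characterizing the minimizers.} Two directions are needed.
\begin{itemize}
\item If $M$ is diagonal, its diagonal is its spectrum, so $\delta$ is a permutation of $\lambda$ and the bound is attained.
\item Conversely, if $\delta$ is a permutation of $\lambda$, compare Frobenius norms. Since $\|M\|_F^2 = \sum_j \lambda_j^2 = \sum_i \delta_i^2$, all off-diagonal entries of $M$ vanish, so $M$ is diagonal.
\end{itemize}
$M$ is diagonal iff $\Sigma r_i = M_{ii} r_i$ for each $i$, i.e.\ the columns of $R$ are eigenvectors of $\Sigma$. Restricting to $SO(d)$ rather than $O(d)$ costs nothing, since flipping the sign of one column restores $\det R = 1$ without changing $M$. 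This yields the ``up to permutation and sign flips'' statement.

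The main obstacle I expect is the equality case, where ties among eigenvalues need care. Routing the argument through the Frobenius-norm identity avoids it: that identity shows $M$ is diagonal regardless of degeneracies. With repeated eigenvalues, ``eigenvectors'' then means any orthonormal basis of each eigenspace, which is consistent with the statement.
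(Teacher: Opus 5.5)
Your proposal is correct and follows the paper's route: the column-wise closed form from spherical symmetry, Schur--Horn majorization of $\operatorname{diag}(R^\top\Sigma R)$ by the spectrum of $\Sigma$, concavity of $\sqrt{\cdot}$, and the equivalence between diagonal $R^\top\Sigma R$ and eigenvector columns. The one local change is that you replace the paper's appeal to Karamata with the explicit doubly stochastic matrix $P_{ij}=(r_i^\top v_j)^2$ and row-wise Jensen.

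Your equality analysis goes beyond the paper, which only shows that a diagonal $R^\top\Sigma R$ attains the bound. The claim that \emph{every} minimizer has eigenvector columns needs the converse, and your strict-concavity plus Frobenius-norm argument supplies it. You could shorten it: a row of $P$ supported on a single eigenvalue class already places $r_i$ in that eigenspace, so the Frobenius step is unnecessary.
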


\begin{proof}
Let \(r_i\) denote the \(i\)-th column of \(R\). Then the \(i\)-th local coordinate is
\[
u_i = e_i^\top R^\top a = r_i^\top a.
\]
Since \(a = \Sigma^{1/2} z\),
\[
u_i = r_i^\top \Sigma^{1/2} z.
\]
By spherical symmetry of \(z\), the distribution of \(r_i^\top \Sigma^{1/2} z\) depends only on its standard deviation. Hence
\[
\mathbb{E}|u_i|
=
c \sqrt{\mathrm{Var}(u_i)}
=
c \sqrt{r_i^\top \Sigma r_i}
=
c \sqrt{(R^\top \Sigma R)_{ii}},
\]
where \(c = \mathbb{E}|z_1|>0\) is independent of \(R\). Summing over coordinates gives
\[
\mathcal{J}(R)
=
\sum_{i=1}^d \mathbb{E}|u_i|
=
c \sum_{i=1}^d \sqrt{(R^\top \Sigma R)_{ii}}.
\]

Now let \(d(R)\) denote the diagonal vector of \(R^\top \Sigma R\). By the Schur--Horn theorem, \(d(R)\) is majorized by the eigenvalue vector \(\lambda(\Sigma)\). Since the function \(x \mapsto \sqrt{x}\) is concave on \(\mathbb{R}_+\), Karamata's inequality implies
\[
\sum_{i=1}^d \sqrt{(R^\top \Sigma R)_{ii}}
\ge
\sum_{i=1}^d \sqrt{\lambda_i},
\]
with equality when \(R^\top \Sigma R\) is diagonal.

Finally, if \(R^\top \Sigma R = \Lambda\) for some diagonal matrix \(\Lambda\), then
\[
\Sigma R = R \Lambda.
\]
Therefore, each column \(r_i\) of \(R\) satisfies
\[
\Sigma r_i = \Lambda_{ii} r_i,
\]
so \(r_i\) is an eigenvector of \(\Sigma\). Hence the minimizing frame aligns its axes with the eigen-directions, i.e., the principal directions, of the training action distribution. The solution is unique only up to permutation of axes and sign flips, and is additionally non-unique within eigenspaces corresponding to repeated eigenvalues.
\end{proof}

\section{Experimental Details}
We evaluate on the LIBERO and LIBERO-plus benchmark, including LIBERO-Spatial, LIBERO-Object, LIBERO-Goal, and LIBERO-long. Following the standard protocol, training uses the full LIBERO mixture without a held-out validation split. Input images are resized to \(224 \times 224\), and proprioceptive inputs use 6 active state dimensions from the 8-dimensional robot state. Each task is evaluated over 50 rollouts in LIBERO and 1 rollout in LIBERO-plus, and success rate is reported.

MCF-Proto uses Qwen3-VL-4B-Instruct as the vision-language backbone and a MCF-Proto action head. The action space is 7-dimensional, and the model predicts a horizon of 7 actions. We use latent dimension \(d=3\), with \(K_t=16\) translation prototypes and  \(K_r=16\) rotation prototypes. 

Training uses AdamW with cosine decay and 5k-step warmup. The base learning rate is \(2.5\times10^{-5}\). Models are trained for 40k steps on 4 NVIDIA A-100 GPUs with effective batch size 128. The total training time cost about 36 hours. At inference, we use the 40k-step checkpoint and execute chunked action prediction with chunk size 8.

\section{Prototype Usage Analysis}
\label{Prototype Usage Analysis}
\begin{figure}[t]
  \centering
  \includegraphics[width=1.0\textwidth]{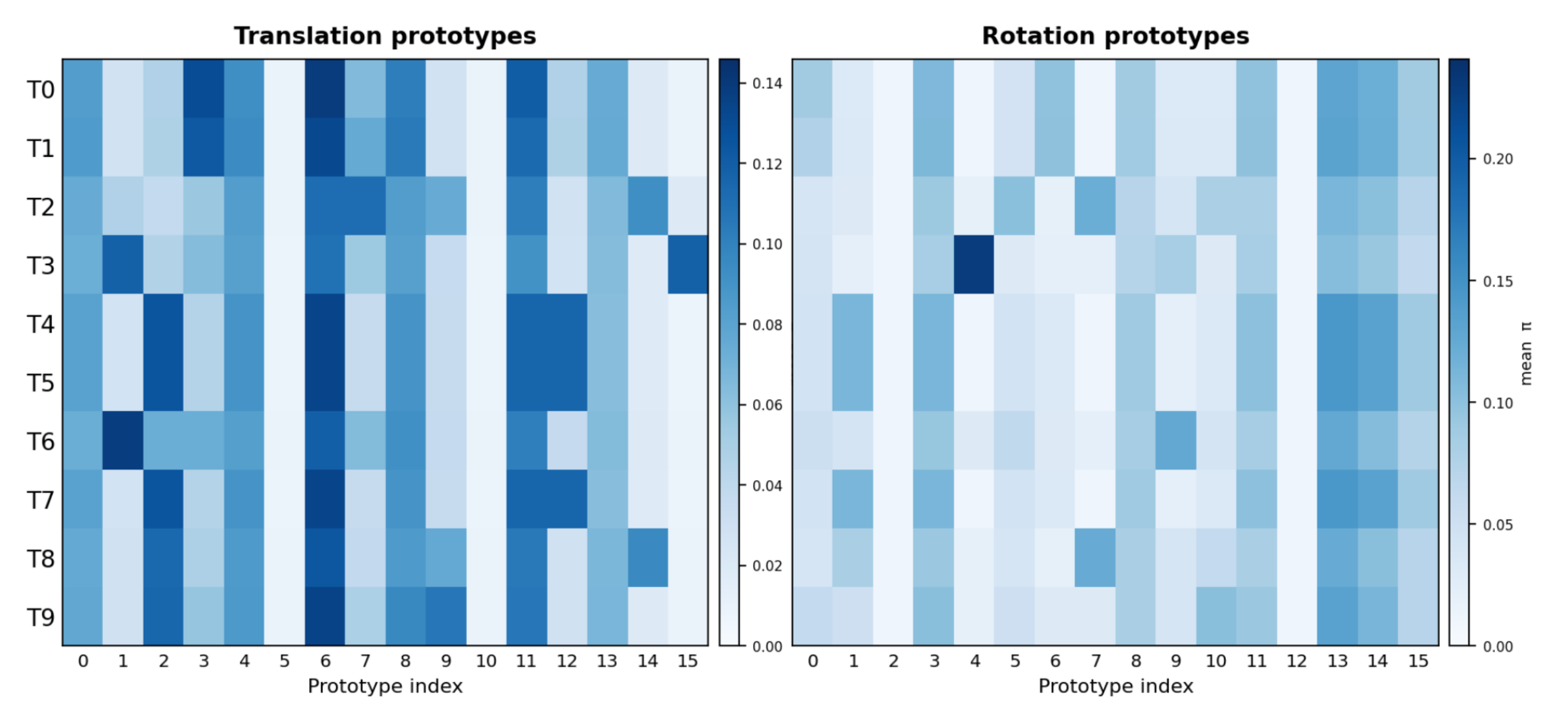}
  \caption{Prototype usage distribution of the learned action model across LIBERO-long tasks (K = 16)}
  \label{fig:proto}   

\end{figure}
To understand how the learned action vocabulary is utilized across tasks, we analyze the prototype gating distribution $\pi$ of the trained model on the LIBERO benchmark. For each task, we collect per-frame gating weights from all episodes in the training split and compute the mean $\pi$ over frames, yielding a tasks × K matrix for both translation and rotation prototypes (K = 16). The resulting heatmaps are shown in Figure~\ref{fig:proto}.
Two complementary patterns are visible. First, a set of prototypes receives consistently high activation across all ten tasks, forming a shared backbone of task-agnostic motion primitives. These broadly-used prototypes correspond to elementary manipulation phases—reaching toward an object, grasping, lifting, and returning—that appear in virtually every task regardless of scene geometry or object type. Second, a distinct subset of prototypes exhibits strong task-group specificity. In the translation panel, tasks T4, T5, and T7 share elevated weights on a common prototype associated with downward placement trajectories, while T6 and T3 preferentially activate a separate prototype reflecting approach motions toward the stove region. The rotation panel reveals an even sharper specialization: T3 (turn knob) concentrates nearly a quarter of its rotational gating mass onto a single prototype, reflecting the large, structured wrist rotation unique to that task—a pattern absent in all other tasks.
Taken together, these results support the hypothesis that the MCF-Proto action head learns a compositional action vocabulary: a compact set of reusable primitives covers universal motion structure, while a sparse collection of specialized prototypes captures task-conditioned variation. This decomposition emerges purely from task-level supervision without any explicit prototype assignment, demonstrating that structured action representations can be self-organized through the proposed gating mechanism.

\section{MCF Axis Alignment Analysis}
\begin{figure}[t]
  \centering
  \includegraphics[width=1.0\textwidth]{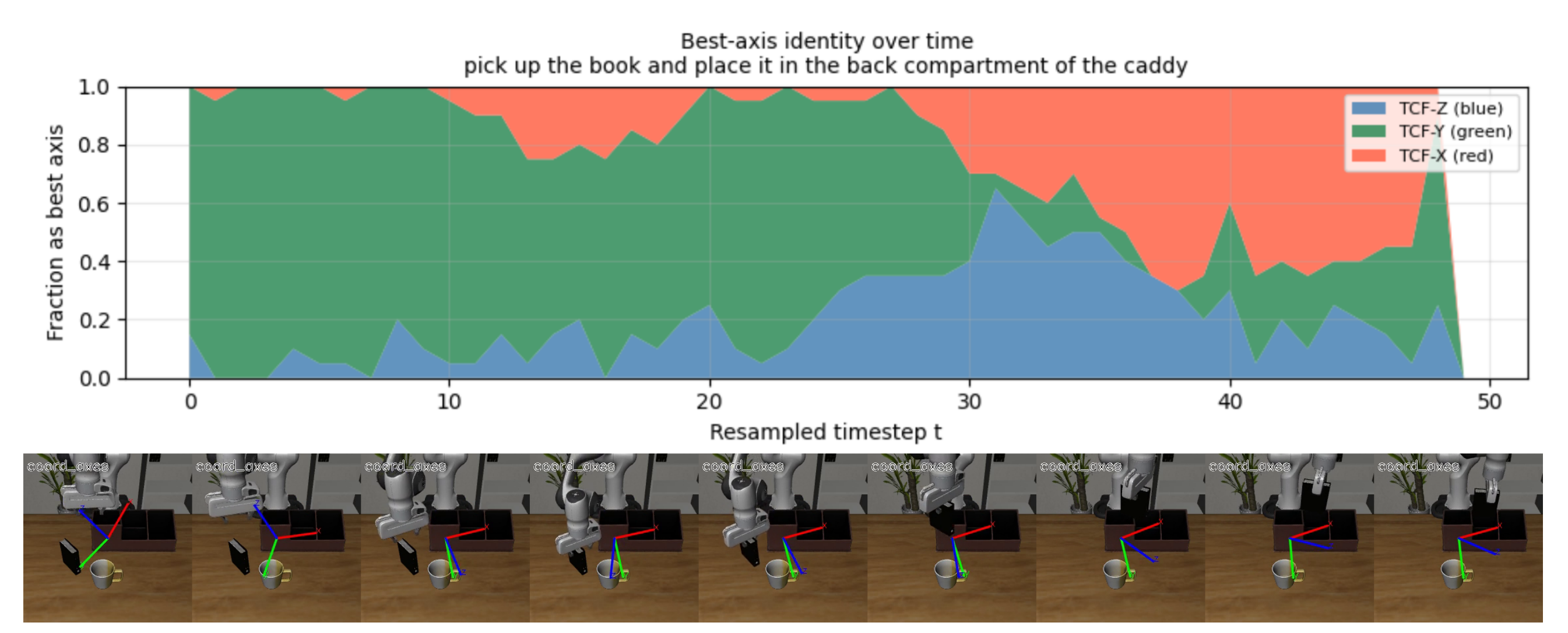}
  \caption{Dominant MCF axis over time for task "pick up the book and place it in the black compartment of the caddy".}
  \label{fig:mainaxis}   

\end{figure}
\label{MCF Axis Alignment Analysis}
To verify that the MCF captures semantically meaningful action structure, we examine how the dominant action axis evolves over the course of a task. Figure \ref{fig:mainaxis} shows the per-timestep distribution of the "best axis"---the MCF axis carrying the largest action magnitude---aggregated across all episodes of the task \textit{pick up the book and place it in the black compartment of the caddy}.

The distribution reveals a clear phase structure aligned with the task's semantic stages. During the early phase (\(t \approx 0\text{–}25\)), actions are predominantly decomposed along the MCF-Y (green) axis, corresponding to downward reaching toward the book. In the grasping and posture adjustment phase (\(t \approx 25\text{–}35\)), the MCF-Z (blue) axis dominates as the robotic arm rotates about its axis after securing the book. During the lateral translation phase (\(t \approx 35\text{–}45\)), the MCF-X (red) axis governs the motion of moving the book toward the caddy. In the final phase (\(t \approx 45\text{–}50\)), the MCF-Y (green) axis again dominates, corresponding to the book-placement movement.
An overlap between Phase 2 and Phase 3 indicates that the robotic arm performs simultaneous translation and rotation, which is also consistent with empirical observations.

Crucially, this axis-switching behavior emerges without any explicit phase annotation or axis supervision. Rather than indicating that the frame rotates to follow the instantaneous motion direction at every step, this pattern suggests that the MCF evolves smoothly over time, while stage changes are primarily expressed through shifts in which axis becomes dominant. This interpretation is consistent with the smoothness regularization described in the main text, which encourages temporal consistency in the frame and explains multi-stage changes through axis reassignment rather than excessive frame rotation. Overall, the smooth yet structured transitions between dominant axes suggest that the MCF provides a principled decomposition of multi-stage manipulation into a sequence of locally near-uniaxial motions, supporting more compact and reusable action representations.

\section{Additional visualization}
\begin{figure}[ht]
  \centering
  \includegraphics[width=1.0\textwidth]{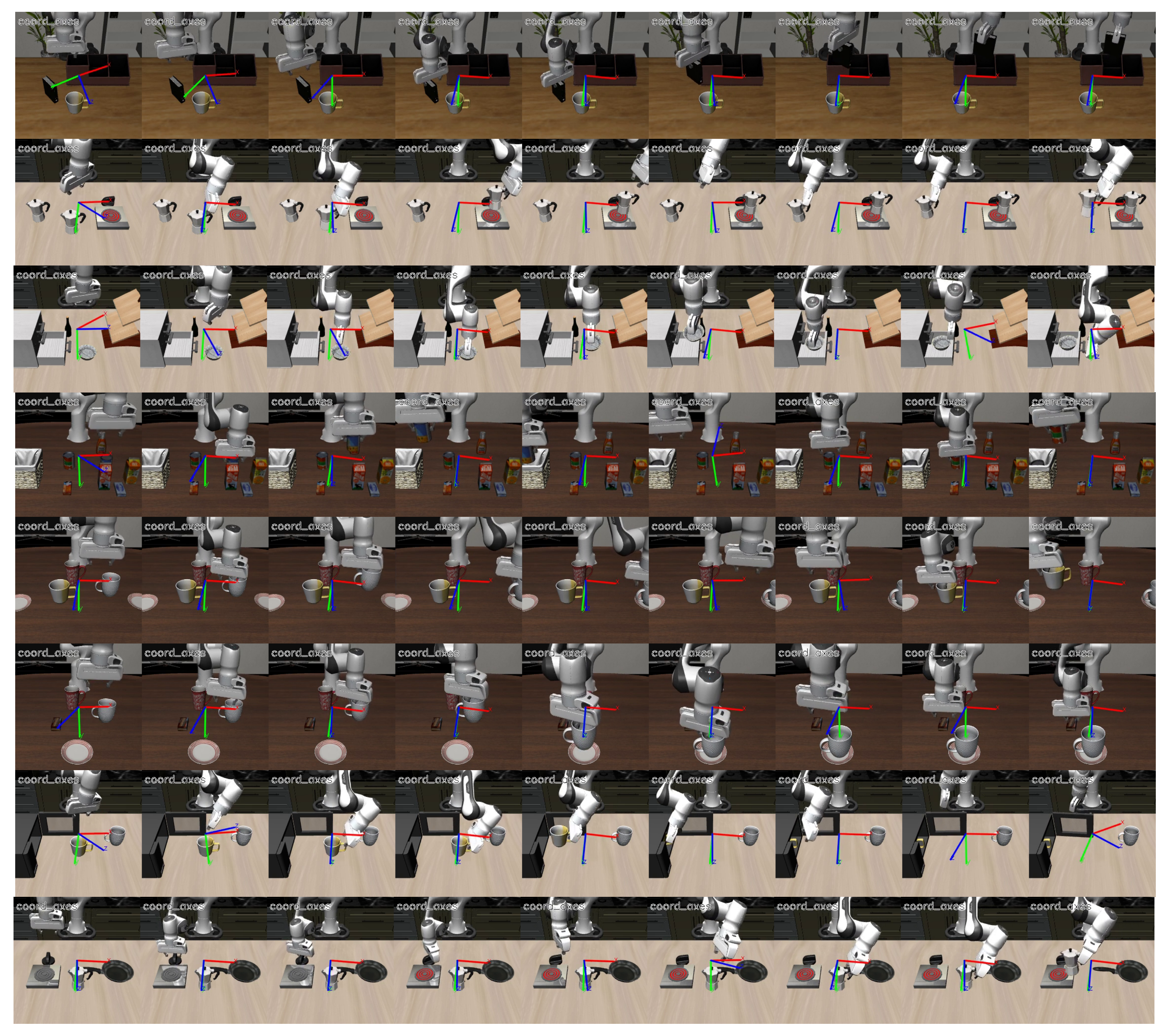}
  \caption{Additional visualization results of MCF-Proto
}
  \label{fig:addvis}   
\end{figure}


\newpage

\end{document}